\documentclass{llncs}
\usepackage{makeidx}  % allows for indexgeneration
\usepackage{bm}
\usepackage{amsmath}
\usepackage{amssymb}
\usepackage{algorithmicx}
\usepackage{algorithm}% http://ctan.org/pkg/algorithms
\usepackage{algpseudocode}% http://ctan.org/pkg/algorithmicx
\usepackage{authblk}
\usepackage{graphics}
\usepackage[dvips]{graphicx}
\usepackage{subfigure}
\usepackage{epstopdf}
\newcommand{\argmin}{\arg\!\min}

\begin{document}
\title{Unsupervised Feature Analysis with \\ Class Margin Optimization}
%
%\titlerunning{Unsupervised Featue Analysis}  % abbreviated title (for running head)
%                                     also used for the TOC unless
%                                     \toctitle is used
%
\author{
Sen Wang\inst{1}\thanks{sen.wang@uq.edu.au}\and
Feiping Nie\inst{2}\thanks{feiping.nie@gmail.com} \and
Xiaojun Chang\inst{3}\thanks{cxj273@gmail.com} \\
Lina Yao\inst{4}\thanks{lina@cs.adelaide.edu.au}\and
Xue Li\inst{1}\thanks{xueli@itee.uq.edu.au} \and
Quan Z. Sheng\inst{4}\thanks{michael.sheng@adelaide.edu.au}
}
\institute
{
School of ITEE, The University of Queensland, Australia. \and
Center for OPTIMAL, Northwestern Polytechnical University, Shaanxi, China. \and
Center for QCIS, University of Technology Sydney, Australia. \and
School of CS, The University of Adelaide, Australia.
}
%		Feiping Nie\inst{2} \and 
%		Xiaojun Chang\inst{1} \\ 
%		Yi Yang \inst{3} \and
%		Xue Li \inst{1} \and
%		Quan Z. Sheng 
%		}
%
%\authorrunning{Xiaojun Chang et al.} % abbreviated author list (for running head)
%
%
%\institute{School of Information Technology and Electrical Engineering, The University of Queensland, Australia.\\
%\and
%Department of Computer Science and Engineering, The University of Texas, Arlington, USA \\
%\and
%Center for Quantum Computation and Intelligent Systems, University of Technology Sydney, Australia. \\
%\email{sen.wang@uq.edu.au, feipingnie@gmail.com, cxj273@gmail.com, yi.yang@uts.edu.au} 
%}

\maketitle              % typeset the title of the contribution

\begin{abstract}
Unsupervised feature selection has been always attracting research attention in the communities of machine learning and data mining for decades. In this paper, we propose an unsupervised feature selection method seeking a feature coefficient matrix to select the most distinctive features. Specifically, our proposed algorithm integrates the Maximum Margin Criterion with a sparsity-based model into a joint framework, where the class margin and feature correlation are taken into account at the same time. To maximize the total data separability while preserving minimized within-class scatter simultaneously, we propose to embed K-means into the framework generating pseudo class label information in a scenario of unsupervised feature selection. Meanwhile, a sparsity-based model, $\ell_{2,p}$-norm, is imposed to the regularization term to effectively discover the sparse structures of the feature coefficient matrix. In this way, noisy and irrelevant features are removed by ruling out those features whose corresponding coefficients are zeros. To alleviate the local optimum problem that is caused by random initializations of K-means, a convergence guaranteed algorithm with an updating strategy for the clustering indicator matrix, is proposed to iteratively chase the optimal solution. Performance evaluation is extensively conducted over six benchmark data sets. From plenty of experimental results, it is demonstrated that our method has superior performance against all other compared approaches.     

\keywords{unsupervised feature selection, maximum margin criterion, sparse structure learning, embedded K-means clustering}
\end{abstract}
%

%\input{introduction}
%!TEX root = main.tex
\section{Introduction}
Over the past few years, data are more than often represented by high-dimensional features in a number of research fields, such as data mining \cite{wu2014data}, computer vision \cite{sun2014deep}, etc. With the inventions of such many sophisticated data representations, a problem has been never lack of research attention: How to select the most distinctive features from high-dimensional data for subsequent learning tasks, e.g. classification? To answer this question, we take two points into account. First, the number of selected features should be smaller than the one of all features. Due to a lower dimensional representation, the subsequent learning tasks with no doubt can gain benefit in terms of efficiency \cite{ZhuHYSXL13}. Second, the selected features should have more discriminant power than the original all features. Many previous works have proven that removing those noisy and irrelevant features can improve discriminant power in most cases. In light of advantages of feature selection, different new algorithms have been flourished with various types of applications recently.

According to the types of supervision, feature selection can be generally divided into three categories, i.e. supervised, semi-supervised, and unsupervised feature selection algorithms. Representative supervised feature selection algorithms include Fisher score \cite{duda2012pattern}, Relief\cite{kira1992practical} and its extension, ReliefF \cite{kononenko1994estimating}, information gain \cite{raileanu2004theoretical}, etc \cite{senchang2014,YangMHS13}. Label information of training data points is utilized to guide the supervised feature selection methods to seek distinctive subsets of features with different search strategies, i.e. \textit{complete search, heuristic search, and non-deterministic search}. In the real world, class information is quite limited, resulting in the development of semi-supervised feature selection methods \cite{xu2010discriminative,han2014semisupervised,chang2014convex,chang2014semi}, in which both labeled and unlabeled data are utilized. 

In unsupervised scenarios, feature selection is more challenging, since there is no class information to use for selecting features. In the literature, unsupervised feature selection can be roughly categorized into three groups, i.e. \textit{filter}, \textit{wrapper}, and \textit{embedded methods}. Filter-based unsupervised feature selection methods rank features according to some intrinsic properties of data. Then those features with higher scores are selected for the further learning tasks. The selection is independent to the consequent process. For example, He et al. \cite{he2005laplacian} assume that data from the same class are often close to each other and use the locality preserving power of data, also termed as \textit{Laplacian Score}, to evaluate importance degrees of features. In \cite{zhao2007spectral}, a unified framework has been proposed for both supervised and unsupervised feature selection schemes using a spectral graph. Tabakhi et al. \cite{tabakhi2014unsupervised} have proposed an unsupervised feature selection method to select the optimal feature subset in an iterative algorithm, which is based on ant colony optimization. 
Wrapper-based methods as a more sophisticated way wrap learning algorithms to yield learned results that will be used to select distinctive subsets of features. In \cite{maugis2009variable}, for instance, the authors have developed a model that selects relevant features using two backward stepwise selection algorithms without prior knowledges of features. Normally, wrapper-based methods have better performance than filter-based methods, since they use learning algorithms. Unfortunately, the disadvantage is that the computation of wrapper methods is more expensive. Embedded methods are seeking a trade-off between them by integrating feature selection and clustering together into a joint framework. Because clustering algorithms are able to provide pseudo labels which can reflect the intrinsic information of data, some works \cite{cai2010unsupervised,li2012unsupervised,qian2013robust,wang2015embedded} incorporate different clustering algorithms in objective functions to select features.

Most of the existing unsupervised feature selection methods \cite{he2005laplacian,zhao2007spectral,qian2013robust,WangNH14,li2012unsupervised,HouNLYW14} rely on a graph, e.g. \textit{graph Laplacian}, to reflect intrinsic relationships among data, labeled and unlabeled. When the number of data is extremely large, the computational burden of constructing a graph Laplacian is significantly heavy. Meanwhile, some traditional feature selection algorithms \cite{he2005laplacian,duda2012pattern} neglect correlations among features. The distinctive features are individually selected according to the importance of each feature rather than taking correlations among features into account. Recently, exploiting feature correlations has attracted much research attention \cite{yang2011l2,qian2013robust,chang2015ijcai,du2014multiple,NieHCD10,YangZWP08}. It has proven that discovering feature correlation is beneficial to feature selection. 

In this paper, we propose a graph-free method to select features by combining Maximum Margin Criterion with feature correlation mining into a joint framework. Specifically, the method, on one hand, aims to learn a feature coefficient matrix which linearly combines features to maximize the class margins. With the increase of the separability of the entire transformed data by maximizing the total scatter, the proposed method also expects distances between data points within the same class to be minimized after the linear transformation by the coefficient matrix. Since there is no class information can be borrowed from, K-means clustering is jointly embedded in the framework to provide pseudo labels. Inspired by recent feature selection works using sparsity-based model on the regularization term \cite{chang2014semi}, on the other hand, the proposed algorithm learns sparse structural information of the coefficient matrix, with the goal of reducing noisy and irrelevant features by removing those features whose coefficients are zeros. The main contributions of this paper can be summarized as follows:
\begin{itemize}
\item The proposed method makes efforts to maximize class margins in a framework, where simultaneously considers the separability of the transformed data and distances between the transformed data within the same class. Besides, a sparsity-based regularization model is jointly applied on the feature coefficient matrix to analyze correlations among features in an iterative algorithm;
\item K-means clustering is embedded into the framework generating cluster labels, which can be used as pseudo labels. Both maximizing class margins and learning sparse structures can benefit from generated pseudo labels during each iteration; 
\item Because the performance of K-means is dominated by the initialization, we propose a strategy to avoid our algorithm rapidly converge to a local optimum, which is largely ignored by most of existing approaches using K-means clustering. Theoretical proof of convergence is also provided.
\item We have conducted extensive experiments over six benchmark datasets. The experimental results show that our method has better performance than all the compared unsupervised algorithms.
\end{itemize}

The rest of this paper is organized as follows: Notations and definitions that are used throughout the entire paper will be given in section 2. Our method will be elaborated in section 3, followed by proposing its optimization with an algorithm to guarantee the convergence property in section 4. In section 5, extensive experimental results are reported with related analysis. Lastly, the conclusion of this paper will be given in section 6.
\section{Notations and Definitions}
To give a better understanding of the proposed method, notations and definitions which are used throughout this paper are summarized in this section. Matrices and vectors are written as boldface uppercase letters and boldface lowercase letters, respectively. Given a data set denoted as $\bm{X}=[\bm{x}_1, \ldots, \bm{x_n}] \in \mathbb{R}^{d\times n}$, where $n$ is the number of training data and $d$ is the feature dimension. The mean of data is denoted as $\bm{\bar{x}}$. The feature coefficient matrix, $\bm{W} \in \mathbb{R}^{d\times d^\prime}$, linearly combines data features as $\bm{W}^T\bm{X}$, $d^\prime$ is the feature dimension after the linear transformation. Given a cluster centroid matrix for the transformed data, $\bm{G} = [\bm{g}_1, \ldots, \bm{g}_c]\in\mathbb{R}^{d^\prime \times c}$, its cluster indicator of transformed $\bm{x}_i$ is represented as $\bm{u}_i = [u_{i1}, \ldots, u_{ic}]$. $c$ is the number of centroids. If transformed $\bm{x}_i$ belongs to the $j$-th cluster, $u_{ij}=1$, $j=1,\ldots, c$. Otherwise, $u_{ij}=0$. Correspondingly, the cluster indicator matrix is $\bm{U}=[\bm{u}_1^T, \ldots, \bm{u}_n^T]^T\in \mathbb{R}^{n\times c}$.

%According to \cite{fukunaga1990}, the separability of data can be defined with total scatter matrix: 
%\begin{equation}
%	\bm{S}_t = \sum\limits_{i=1}^n(\bm{x}_i-\bm{\bar{x}})(\bm{x}_i-\bm{\bar{x}})^T,
%\end{equation}
%where $\bm{\bar{x}}$ is the mean of data. 
For an arbitrary matrix $\bm{M} \in \mathbb{R}^{r\times l}$, its $\ell_{2,p}$-norm is defined as:
\begin{equation}
\label{l2pnorm}
	\| \bm{M} \|_{2,p} = \left[\sum\limits_{i=1}^r \left(\sum\limits_{j=1}^l M_{ij}^2\right)^{\frac{p}{2}}\right]^{\frac{1}{p}}
\end{equation}
The $i$-th row of $\bm{M}$ is represented by $\bm{M}^i$. 
The between-class, within-class and total scatter matrices of data are respectively defined as: 
\begin{equation}
	\begin{aligned}
\label{eq_sb}
\bm{S_b} & = \sum\limits_{i=1}^c n_i(\bm{\bar{x}_i} - \bm{\bar{x}})(\bm{\bar{x}_i} - \bm{\bar{x}})^T,         \\
\bm{S_w} & = \sum\limits_{i=1}^c \sum\limits_{j=1}^{n_i}(\bm{x_j} - \bm{\bar{x}}_i)(\bm{x_j} - \bm{\bar{x}}_i)^T, \\
\bm{S_t} & = \sum\limits_{i=1}^n (\bm{x_i} - \bm{\bar{x}})(\bm{x_i} - \bm{\bar{x}})^T                        
	\end{aligned}
\end{equation}
where $n_i$ is the number of data for the $c$-th class. $\bm{S_t}=\bm{S_w}+\bm{S_b}$. 
Other notations and definitions will be explained when they are in use.
\section{Proposed Method}
%To perform feature selection in an unsupervised manner, we propose a method that jointly conducts cluster analysis and and exploits sparse structures in the feature coefficient matrix. The considerations are two-fold:
We now introduce our proposed method for unsupervised feature selection. To exploit distinctive features, an intuitive way is to find a linear transformation matrix which can project the data into a new space where the original data are more separable. PCA is the most popular approach to analyze the separability of features. PCA aims to seek directions on which transformed data have max variances. In other words, PCA is to maximize the separability of linearly transformed data by maximizing the covariance: $\max\limits_{\bm{W}} \sum\limits_{i=1}^n (\bm{W}^T(\bm{x}_i - \bm{\bar{x}}))^T(\bm{W}^T(\bm{x}_i - \bm{\bar{x}}))$. Without losing the generality, we assume the data has zero mean, i.e. $\bar{\bm{x}}=0$. Recall the definition of total scatter of data, PCA is equivalent to maximize the total scatter of data. However, if only total scatter is considered as a separability measure, the within-class scatter might be also geometrically maximized with the maximization of the total scatter. This is not helpful to distinctive feature discovery. The representative model, LDA, solves this problem by maximizing Fisher criterion: $\max\limits_{\bm{W}}\frac{\bm{W}^T\bm{S_bW}}{\bm{W}^T\bm{S_wW}}$. However, LDA and its variants require class information to construct between-class and within-class scatter matrices \cite{DBLP:journals/tnn/CHANGXJ15}, which is not suitable for unsupervised feature selection. Before we give the objective that can solve the aforementioned problem, we first look at a supervised feature selection framework:
\begin{equation}
	\begin{aligned}
	\label{eq_framework_initial}
	\max\limits_{\bm{W}} &\sum\limits_{i=1}^n (\bm{W}^T \bm{x}_i)^T(\bm{W}^T \bm{x}_i) - \alpha\sum\limits_{i=1}^c \sum\limits_{j=1}^{n_i}(\bm{W}^T(\bm{x_j} - \bm{\bar{x}}_i))^T(\bm{W}^T(\bm{x_j} - \bm{\bar{x}}_i)) - \beta\Omega(\bm{W}) \\
\textbf{s.t.}&\quad \bm{W}^T\bm{W}=\bm{I}, 
	\end{aligned}
\end{equation}
where $\alpha$ and $\beta$ are regularization parameters. In this framework, the first term is to maximize the total scatter, while the second term is to minimize the within-class scatter. The third part is a sparsity-based regularization term which controls the sparsity of $\bm{W}$. This model is quite similar with the classical LDA-based methods. Due to there is no class information in the unsupervised scenario, we need virtual labels to minimize the distances between data within the same class while maximize the total separability at the same time. To achieve this goal, we apply K-means clustering in our framework to replace the ground truth by generating cluster indicators of data. Given $c$ centroids $\bm{G}=[\bm{g}_1,\ldots,\bm{g}_c] \in\mathbb{R}^{d^\prime \times c}$, the objective function of the traditional K-means algorithm aims to minimize the following function:
\begin{equation}
	\label{eq_kmeans}
	\begin{aligned}
		&\sum\limits_{i=1}^c \sum\limits_{\bm{y}_j \in \mathcal{Y}_i} (\bm{y}_j - \bm{g}_i)^T(\bm{y}_j - \bm{g}_i) \\
	%	=&\sum\limits_{j=1}^c \sum\limits_{i=1}^n (\bm{x}_i^T \bm{W} - \bm{g}_i)^T(\bm{x}_i^T \bm{W} - \bm{g}_i)\cdot \mathcal{FX}_j(\bm{x}_i) \\
		=&\sum\limits_{i=1}^n (\bm{y}_i - \bm{G}\bm{u}_i^T)^T(\bm{y}_i - \bm{G}\bm{u}_i^T),\\
	%	=&Tr\left(\sum\limits_{i=1}^n (\bm{x}_i^T \bm{W} - \bm{G}\bm{u}_i^T)(\bm{x}_i^T \bm{W} - \bm{G}\bm{u}_i^T)^T\right)\\
	%	=&\|\bm{X}^T \bm{W} - \bm{G}\bm{U^T}\|_F^2
\end{aligned}
\end{equation}
where $\bm{y}_i = \bm{W}^T\bm{x}_i$. Note that K-means is used to assign cluster labels, which are used as pseudo labels, to minimize the within-class scatter after the linear transformation by $\bm{W}$. Then, we can substitute \eqref{eq_kmeans} into \eqref{eq_framework_initial}:
\begin{equation}
	\label{eq_framework_kmeans}
	\begin{aligned}
		\max\limits_{\bm{W}} &\sum\limits_{i=1}^n (\bm{W}^T\bm{x}_i)^T(\bm{W}^T\bm{x}_i) - \alpha \sum\limits_{i=1}^n (\bm{W}^T\bm{x}_i - \bm{G}\bm{u}_i^T)^T(\bm{W}^T\bm{x}_i - \bm{G}\bm{u}_i^T)- \beta \Omega(\bm{W})\\
\textbf{s.t.}&\quad \bm{W}^T\bm{W}=\bm{I}, 
	\end{aligned}
\end{equation}
As mentioned above, the sparsity-based regularization term has been widely used to find out correlated structures among features. The motivation behind this is to exploit sparse structures of the feature coefficient matrix. By imposing the sparse constraint, some of the rows of the feature coefficient matrix shrink to zeros. Those features corresponding to non-zero coefficients are selected as the distinctive subset of features. In this way, noisy and redundant features can be removed. This sparsity-based regularization has been applied in various problems. Inspired by the \textit{"shrinking to zero"} idea, we utilize a sparsity model to uncover the common structures shared by features. To achieve that goal, we propose to minimize the $\ell_{2,p}$-norm of the coefficient matrix, $\| \bm{W} \|_{2,p}, (0<p<2)$. From the definition of $\| \bm{W} \|_{2,p}$ in \eqref{l2pnorm}, outliers or negative impact of the irrelevant $\bm{w^i}$'s are suppressed by minimizing the $\ell_{2,p}$-norm. Note that $p$ is a parameter that controls the degree of correlated structures among features. The lower $p$ is, the more shared structures among are expected to exploit. After a number of optimization steps, the optimal feature coefficient matrix, $\bm{W}$, can be obtained. Thus, we impose the $\ell_{2,p}$-norm on the regularization term and re-write the objective function in a matrix representation as follows:
\begin{equation}
\label{eq_obj_final}
\begin{aligned}
\max_{\bm{W}, \bm{G}, \bm{U}} &Tr(\bm{W}^T\bm{S_t}\bm{W}) - \alpha \|\bm{W}^T\bm{X}-\bm{G}\bm{U^T}\|_F^2 - \beta \|\bm{W}\|_{2,p}\\
\textbf{s.t.}&\quad \bm{W}^T\bm{W}=\bm{I}, 
\end{aligned}
\end{equation}
where $\bm{U}$ is an indicator matrix. $Tr(\cdot)$ is trace operator, while $\|\cdot\|_F^2$ is the Frobenius norm of a matrix. Our proposed method integrates the Maximum Margin Criterion and sparse regularization into a joint framework. Embedding K-means into the framework not only minimizes the distances between within-class data while maximizing total data separability, but also provides cluster labels. The cluster centroids generated by K-means can further guide the sparse structure learning on the feature coefficient matrix in each iterative step of our solution, which will be explained in the next section. We name this method for the unsupervised feature analysis with class margin optimization as \textbf{UFCM}. 

\section{Optimization}
In this section, we present our solution to the objective function in \eqref{eq_obj_final}. Since the $\ell_{2,p}$-norm is used to exploit sparse structures, the objective function cannot be solved in a closed form. Meanwhile, the objective function is not jointly convex with respect to three variables, i.e. $\bm{W,G,U}$. Thus, we propose to solve the problem as follows.

We define a diagonal matrix $\bm{D}$ whose diagonal entries are defined as: 
\begin{equation}
\label{eq_2pnorm_dmatrix}
\bm{D}^{ii} = \frac{1}{\frac{2}{p}\| \bm{w}^i\|_2^{2-p}}.
\end{equation}
The objective function in \eqref{eq_obj_final} is equivalent to:
\begin{equation}
\label{eq_obj_equivalent}
\begin{aligned}
\max_{\bm{W}, \bm{G}, \bm{U}} &Tr(\bm{W}^T\bm{S_t}\bm{W}) - \alpha \|\bm{W}^T\bm{X}-\bm{G}\bm{U^T}\|_F^2 - \beta Tr(\bm{W}^T \bm{DW})\\
\textbf{s.t.}&\quad \bm{W}^T\bm{W}=\bm{I}
\end{aligned}
\end{equation}
We propose to optimize the objective function in two steps in each iteration as follows:

(1) Fix $\bm{W,G}$ and optimize $\bm{U}$:

When $\bm{W}$ is fixed, the first and third terms can be viewed as constants. While the second term can be viewed as the objective function of K-means, assigning cluster labels to each data. Also, the cluster centroid matrix $\bm{G} = [\bm{g}_1, \ldots, \bm{g}_c]$ is also fixed, the optimal $\bm{U}$ is:
\begin{equation}
U_{ij} = \left\{ \quad\begin{array}{cl}
1, & \qquad j=\argmin\limits_{k} \|\bm{W}^T\bm{x}_i  - \bm{g}_k \|_F^2, \\
0, & \qquad \text{Otherwise.} 
\end{array} \right.
\end{equation}
This is equivalent to perform K-means on the transformed data, $\bm{W}^T\bm{X}$, which means the solution is unique.

(2) Fix $\bm{U}$ and optimize $\bm{W, G}$:

After fixing the indicator matrix, $\bm{U}$, we set the derivative of Equation \eqref{eq_obj_equivalent} with respect to $\bm{G}$ equal to 0:
\begin{equation}
\label{eq_G}
\begin{aligned}
-\alpha \frac{\partial{Tr(\bm{W}^T\bm{X} - \bm{GU}^T)^T(\bm{W}^T\bm{X} - \bm{GU}^T)}}{\partial \bm{G}}& = 0 \\
\Rightarrow -2\alpha\bm{W}^T\bm{XU} + 2\alpha \bm{GU}^T\bm{U} &= 0 \\ 
\Rightarrow \bm{G} = \bm{W}^T\bm{XU}(\bm{U}^T\bm{U})^{-1}
\end{aligned}
\end{equation}
Substituting Equation \eqref{eq_G} into Equation \eqref{eq_obj_equivalent}, we have:
\begin{equation}
\label{eq_obj_equivalent_1}
\begin{aligned}
Tr&(\bm{W}^T\bm{S_t}\bm{W}) - \alpha \|\bm{W}^T\bm{X}-\bm{W}^T\bm{XU}(\bm{U}^T\bm{U})^{-1}\bm{U^T}\|_F^2 - \beta Tr(\bm{W}^T \bm{DW})\\
= \alpha Tr&\left((\bm{W}^T\bm{XU}(\bm{U}^T\bm{U})^{-1}\bm{U^T}-\bm{W}^T\bm{X})(\bm{W}^T\bm{X}-\bm{W}^T\bm{XU}(\bm{U}^T\bm{U})^{-1}\bm{U^T})^T \right)\\
+ Tr&(\bm{W}^T\bm{S_t}\bm{W}) - \beta Tr(\bm{W}^T \bm{DW}) \\
%= \alpha Tr&\left((\bm{W}^T\bm{X}-\bm{W}^T\bm{XU}(\bm{U}^T\bm{U})^{-1}\bm{U^T}) (\bm{X}^T\bm{W}-\bm{U}(\bm{U}^T\bm{U})^{-1}\bm{U}^T\bm{X}^T\bm{W}) \right) \\
%+ Tr&(\bm{W}^T\bm{S_t}\bm{W}) - \beta Tr(\bm{W}^T \bm{DW}) \\
%= \alpha Tr&\left(\bm{W}^T\bm{XX}^T\bm{W} - 2\bm{W}^T\bm{X}\bm{U}(\bm{U}^T\bm{U})^{-1}\bm{U}^T\bm{X}^T\bm{W} \right. \\
%\left. &+ \bm{W}^T\bm{XU}(\bm{U}^T\bm{U})^{-1}\bm{U^T}\bm{U}(\bm{U}^T\bm{U})^{-1}\bm{U}^T\bm{X}^T\bm{W}) \right) \\
%+ Tr&(\bm{W}^T\bm{S_t}\bm{W}) - \beta Tr(\bm{W}^T \bm{DW}) \\
= \alpha Tr&\left(\bm{W}^T\bm{X}\bm{U}(\bm{U}^T\bm{U})^{-1}\bm{U}^T\bm{X}^T\bm{W}-\bm{W}^T\bm{XX}^T\bm{W}\right) \\
+ Tr&(\bm{W}^T\bm{S_t}\bm{W}) - \beta Tr(\bm{W}^T \bm{DW}) \\
= Tr&[\bm{W}^T(\bm{S_t} +\alpha\bm{X}\bm{U}(\bm{U}^T\bm{U})^{-1}\bm{U}^T\bm{X}^T - \alpha\bm{XX}^T-\beta D)\bm{W} ]
\end{aligned}
\end{equation}
Thus, the objective function becomes:
\begin{equation}
\label{eq_obj_equivalent_2}
\begin{aligned}
\max\limits_{\bm{W}} Tr&[\bm{W}^T(\bm{S_t} +\alpha\bm{X}\bm{U}(\bm{U}^T\bm{U})^{-1}\bm{U}^T\bm{X}^T - \alpha\bm{XX}^T-\beta D)\bm{W} ] \\
\textbf{s.t.}&\quad \bm{W}^T\bm{W}=\bm{I}
\end{aligned}
\end{equation}
The objective function can be then solved by performing eigen-decomposition of the following formula:
\begin{equation}
\label{eigndecomp}
\bm{S_t} +\alpha\bm{X}\bm{U}(\bm{U}^T\bm{U})^{-1}\bm{U}^T\bm{X}^T - \alpha\bm{XX}^T-\beta D
\end{equation} 
The optimal $\bm{W}$ can be determined by choosing $d^\prime$ eigenvectors corresponding to $d^\prime$ largest eigenvalues, $d^\prime \leq d$. Our proposed method can be solved by above steps in an iterative algorithm. Each step can obtain the corresponding optimum. As the cluster indicator matrix $\bm{U}$ is initialized by K-means, the performance of our algorithm is determined by the initialization of K-means. To alleviate the local optimum problem, an update strategy for $\bm{U}$ is demanded. Generally speaking, we randomly initialize $\bm{U}$ a number of times and make comparisons according to the second term in Equation \eqref{eq_obj_final}. Then we choose how to update the indicator matrix. Specifically, the optimal $\bm{U}_i^*$ and $\bm{W}_i^*$ has been derived in the $i$-th iteration. In the $(i+1)$-th iteration, we first randomly initialize $\bm{U}$ $r$ times ($r=10$ in our experiment) and combine the derived $\bm{U}_i^*$ in the $i$-th iteration as an updating candidate set: $\tilde{\bm{U}}_{i+1} = [\bm{U}_{i+1}^0, \bm{U}_{i+1}^1, \ldots, \bm{U}_{i+1}^r]$, $\bm{U}_{i+1}^0 = \bm{U}_i^*$. According to $\|\bm{W}^T\bm{X}-\bm{G}{U}^T\|_F^2$, the candidate, which yields the smallest value, is chosen to update $\bm{U}_{i+1}^*$:  
\begin{equation}
\label{eq_update_rule}
\bm{U}_{i+1}^* = \tilde{\bm{U}}_{i+1}^j, \qquad j = \argmin_{j} \|\bm{W}^T\bm{X}-\bm{G}(\tilde{\bm{U}}_{i+1}^j)^T\|_F^2
\end{equation}
where $j$ is the index of candidate set, $j = 0,1,\ldots, r$. In this way, we compare the derived cluster indicator matrix with $r$ randomly initialized counterparts to alleviate the local optimum problem. We summarize the solution in Algorithm \ref{alg} which outputs the learned feature coefficient matrix $\bm{W}$ to select distinctive features.

%\LinesNumbered
\begin{algorithm}[!tb]
\caption{Unsupervised Feature Analysis with Class Margin Optimization.}
\label{alg}
\begin{algorithmic}[1]
\Require 
Data matrix $\bm{X}=[\bm{x}_1, \ldots, \bm{x_n}] \in \mathbb{R}^{d\times n}$ and parameters $\alpha$ and $\beta$.
\Ensure 
Feature coefficient matrix $\bm{W}$ and cluster indicator matrix $\bm{U}$.
\State Initialize $\bm{W}$ by PCA on $\bm{X}$;
\State Initialize $\bm{U}$ by K-means on $\bm{W}^T\bm{X}$;
\Repeat
\State Compute $\bm{D}$ according to \eqref{eq_2pnorm_dmatrix};
\State Update $\bm{U}$ according to \eqref{eq_update_rule}; 
\State Update $\bm{W}$ by eigen-decomposition of \eqref{eigndecomp}; 
\State Update $\bm{G}$ according to \eqref{eq_G};

\Until{Convergence}
\end{algorithmic}
\end{algorithm}
From Algorithm \ref{alg}, it can be seen that the most computational operation is the eigen-decomposition in Equation \eqref{eigndecomp}. The computational complexity is $O(d^3)$. If the dimensionality of the data, $d$, is very high, dimensionality reduction is desirable. To analyze the convergence of our proposed method, the following proposition and its proof are given.
\begin{proposition}
Algorithm \ref{alg} monotonically increases the objective function in Equation \eqref{eq_obj_final} until convergence.

\end{proposition}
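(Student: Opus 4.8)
The plan is to prove monotonic ascent by a standard block-coordinate-ascent argument, showing that each of the three updates in Algorithm~\ref{alg} does not decrease the objective, and then to handle the subtlety introduced by the $\ell_{2,p}$-norm surrogate (the $\bm{D}$ matrix) and by the randomized update of $\bm{U}$. First I would fix the auxiliary function of Equation~\eqref{eq_obj_equivalent} as the working objective and verify that maximizing it coincides with maximizing Equation~\eqref{eq_obj_final} in the sense that the two agree at any point where $\bm{D}$ is computed from the current $\bm{W}$ via \eqref{eq_2pnorm_dmatrix}. The key inequality here is that for $0<p<2$ the scalar function $t\mapsto t^{p/2}$ is concave on $(0,\infty)$, so its first-order expansion gives, for any $a,b>0$,
\begin{equation}
\label{eq_concavity}
\frac{p}{2}\,\frac{a}{b^{1-p/2}} - a^{p/2} \;\geq\; \frac{p}{2}\,b^{p/2} - b^{p/2} \;=\; \left(\frac{p}{2}-1\right)b^{p/2},
\end{equation}
which, summed over rows with $a=\|\bm{w}^i_{new}\|_2^2$ and $b=\|\bm{w}^i_{old}\|_2^2$, yields $\|\bm{W}_{new}\|_{2,p}^p \le \|\bm{W}_{old}\|_{2,p}^p + \big(\text{terms involving }\mathrm{Tr}(\bm{W}_{new}^T\bm{D}\bm{W}_{new})-\mathrm{Tr}(\bm{W}_{old}^T\bm{D}\bm{W}_{old})\big)$. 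This is the usual iteratively reweighted argument and it is what makes the eigen-decomposition step legitimate as an ascent step on the \emph{true} objective, not merely on the surrogate.

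Next I would go through the three updates in order. For the $\bm{U}$-update: with $\bm{W}$ and $\bm{G}$ fixed, the first and third terms of \eqref{eq_obj_equivalent} are constants, and the second term is (up to sign) exactly the K-means assignment cost; the rule in \eqref{eq_update_rule} picks, among the previous iterate $\bm{U}_i^*$ and $r$ fresh candidates, the one minimizing $\|\bm{W}^T\bm{X}-\bm{G}\bm{U}^T\|_F^2$, and since $\bm{U}_i^*$ is in the candidate set the chosen value can only decrease (weakly) this term, hence weakly increase the objective. For the $\bm{G}$-update: \eqref{eq_G} is the unconstrained minimizer of the same quadratic in $\bm{G}$, so it again weakly decreases the second term. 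For the $\bm{W}$-update: after substituting the optimal $\bm{G}$ we arrive at \eqref{eq_obj_equivalent_2}, a trace-maximization over the Stiefel manifold $\bm{W}^T\bm{W}=\bm{I}$, whose global maximum over that constraint is attained by the top-$d'$ eigenvectors of \eqref{eigndecomp}; since the current $\bm{W}$ is feasible, the new $\bm{W}$ weakly increases \eqref{eq_obj_equivalent_2} and hence, by the concavity inequality \eqref{eq_concavity} applied with the fixed $\bm{D}$, weakly increases the original objective \eqref{eq_obj_final}. Chaining these three weak inequalities gives that the objective value is nondecreasing across one full iteration, and since it is bounded above (the first and second terms are bounded on the feasible set because $\bm{W}^T\bm{W}=\bm{I}$ and $\bm{U}$ ranges over a finite set, and $-\beta\|\bm{W}\|_{2,p}\le 0$), the sequence of objective values converges.

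I expect the main obstacle to be the interaction between the randomized $\bm{U}$-update and the eigen-decomposition step when verifying a single \emph{self-consistent} chain of inequalities on the true objective \eqref{eq_obj_final}: one must be careful that $\bm{D}$ is held fixed at its value $\bm{D}(\bm{W}_i)$ throughout steps~5--7 of the iteration, so that the surrogate-to-true passage via \eqref{eq_concavity} is applied at the right point, and only \emph{then} is $\bm{D}$ recomputed at the start of the next iteration. A secondary subtlety worth a sentence is that the $\bm{W}$-update maximizes a trace with the constant $\bm{D}=\bm{D}(\bm{W}_i)$ baked in, so the inequality one gets for free is on \eqref{eq_obj_equivalent}, and the concavity bound is precisely what converts "surrogate went up" into "true objective went up"; without it the claim would be unjustified. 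I would also remark that the argument shows convergence of the objective values but not of the iterates themselves, which is the standard and expected guarantee for this family of algorithms.
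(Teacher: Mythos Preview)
Your proposal is correct and follows the same high-level block-coordinate-ascent strategy as the paper: show that the $\bm{U}$-update weakly increases the objective (because the previous iterate is in the candidate pool of \eqref{eq_update_rule}), and then that the joint $(\bm{W},\bm{G})$-update via \eqref{eq_G} and the eigen-decomposition of \eqref{eigndecomp} weakly increases it again. The paper's own proof consists of exactly these two inequalities, stated directly on the true objective \eqref{eq_obj_final}.

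Where your treatment genuinely adds to the paper is in the surrogate-to-true passage. The paper simply writes its second inequality \eqref{eq_proof_2} on the true objective with $\|\bm{W}\|_{2,p}$ and justifies it ``according to Equation \eqref{eq_obj_equivalent_2}'', but \eqref{eq_obj_equivalent_2} is the trace problem with the \emph{fixed} weight matrix $\bm{D}=\bm{D}(\bm{W}_i)$, so all one gets for free is ascent on the surrogate \eqref{eq_obj_equivalent}. Your concavity bound \eqref{eq_concavity} for $t\mapsto t^{p/2}$ on $(0,\infty)$ is precisely the missing ingredient that converts surrogate ascent into true-objective ascent; the paper omits this step. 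You also make explicit that $\bm{D}$ must be held at $\bm{D}(\bm{W}_i)$ throughout steps~5--7 before being recomputed, and you add the boundedness argument needed to conclude convergence of the objective values, neither of which the paper spells out. One small point to clean up in a final write-up: the concavity argument naturally controls $\|\bm{W}\|_{2,p}^p=\sum_i\|\bm{w}^i\|_2^p$ rather than $\|\bm{W}\|_{2,p}$ itself, so the monotone-ascent conclusion is strictly for the objective with the regularizer read as $\|\bm{W}\|_{2,p}^p$; this is the standard interpretation in the IRLS literature and is consistent with the paper's choice of $\bm{D}$ in \eqref{eq_2pnorm_dmatrix}, but it is worth stating.
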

\begin{proof}
Assuming that, in the $i$-th iteration, the transformation matrix $\bm{W}$ and cluster centroid matrix $\bm{G}$ have been derived as $\bm{W}_i$ and $\bm{G}_i$. In the $(i+1)$-th iteration step, we use $\bm{W}_i$ and $\bm{G}_i$ to update $\bm{U}_{i+1}$ according to the updating strategy in \eqref{eq_update_rule}. We can have the following inequality:
\begin{equation}
\label{eq_proof_1}
\begin{aligned}
&Tr(\bm{W}_i^T \bm{S_t} \bm{W}_i) - \alpha \| \bm{W}_i^T \bm{X} - \bm{G}_i\bm{U}_i^T\|_F^2 - \beta \|\bm{W}_i\|_{2,p}\\ 
\leq & Tr(\bm{W}_i^T \bm{S_t} \bm{W}_i) - \alpha \| \bm{W}_i^T \bm{X} - \bm{G}_i\bm{U}_{i+1}^T\|_F^2 - \beta \|\bm{W}_i\|_{2,p}
\end{aligned}
\end{equation}
Similarly, when $\bm{U}_{i+1}$ is fixed to optimize $\bm{W}$ and $\bm{G}$ in the $(i+1)$-th iteration, the following inequality can be obtained according to Equation \eqref{eq_obj_equivalent_2}:
\begin{equation}
\label{eq_proof_2}
\begin{aligned}
&Tr(\bm{W}_i^T \bm{S_t} \bm{W}_i) - \alpha \| \bm{W}_i^T \bm{X} - \bm{G}_i\bm{U}_{i+1}^T\|_F^2 - \beta \|\bm{W}_i\|_{2,p}\\
\leq & Tr(\bm{W}_{i+1}^T \bm{S_t} \bm{W}_{i+1}) - \alpha \| \bm{W}_{i+1}^T \bm{X} - \bm{G}_{i+1}\bm{U}_{i+1}^T\|_F^2 - \beta \|\bm{W}_{i+1}\|_{2,p}\\
\end{aligned}
\end{equation}
\end{proof}
After combining Equation \eqref{eq_proof_1} and \eqref{eq_proof_2} together, it indicates that the proposed algorithm will monotonically increase the objective function in each iteration. It is worth noting that the algorithm is alleviating the local optimum problem raised by random initializations of K-means, rather than completely solving it. However, our algorithm can avoid to rapidly converge to a local optimum and may converge to the global optimal solution.
\section{Experiments}
In this section, experimental results will be presented together with related analysis. We compare our method with seven approaches over six benchmark datasets. Besides, we also conduct experiments to evaluate performance variations in different aspects. They are including the impact of different selected feature numbers, the validation of feature correlation analysis, and parameter sensitivity analysis. Lastly, the convergence demonstration is shown.
\subsection{Experiment Setup}
In the experiments, we have compared our method with seven approaches as follows:
\begin{itemize}
\item \textbf{All Features}: All original variables are preserved as the baseline in the experiments.
\item \textbf{Max Variance}: Features are ranked according to the variance magnitude of each feature in a descending order. The highest ranked features are selected.
\item \textbf{Spectral Feature Selection (SPEC)} \cite{zhao2007spectral}: This method employs a unified framework to select features one by one based on spectral graph theory.
\item \textbf{Multi-Cluster Feature Selection (MCFS)} \cite{cai2010unsupervised}: This unsupervised approach selects those features who make the multi-cluster structure of the data preserved best. Features are selected using spectral regression with the $\ell_1$-norm regularization.
\item \textbf{Robust Unsupervised Feature Selection (RUFS)} \cite{qian2013robust}: RUFS jointly performs robust label learning and robust feature learning. To achieve this, robust orthogonal nonnegative matrix factorization is applied to learn labels while the $\ell_{2,1}$-norm minimization is simultaneously utilized to learn the features.
\item \textbf{Nonnegative Discriminative Feature Selection (NDFS)} \cite{li2012unsupervised}: NDFS exploits local discriminative information and feature correlations simultaneously. Besides, the manifold structure information is also considered jointly.
\item \textbf{Laplacian Score (LapScore)} \cite{he2005laplacian}: This method learns and selects distinctive features by evaluating their powers of locality preserving, which is also called Laplacian Score.  
\end{itemize}

All the parameters (if any) are tuned in the range of $\{10^{-3}, 10^{-1}, 10^1, 10^3\}$ for each algorithm mentioned above and the best results are reported. The size of the neighborhood is set to 5 for any algorithm based on spectral clustering. The number of random initializations required in the update strategy in \eqref{eq_update_rule}, is set at 10 in the experiment. To measure the performance, two metrics have been used: \textit{Clustering Accuracy (ACC)} and \textit{Normalized Mutual Information (NMI)}. 

For a data point $x_i$, its ground truth label is denoted as $p_i$ and its clustering label that is produced from a clustering algorithm, is represented as $q_i$. Then, \textit{ACC} metric over a data set with $n$ data points is defined as follows:
\begin{equation}
ACC= \frac{\sum_{i=1}^n \delta(p_i,map(q_i))}{n},
\end{equation}
where $\delta(x,y)=1$ if $x=y$ and $\delta(x,y)=0$ otherwise. $map(x)$ is the \textit{best mapping function} which permutes clustering labels to match the ground truth labels using the Kuhn-Munkres algorithm. A larger \textit{ACC} means better performance. 

According to the definition in \cite{strehl2003cluster}, \textit{NMI} is defined as:
\begin{equation}
\begin{aligned}
NMI=\frac{\sum_{l=1}^c\sum_{h=1}^c t_{l,h}log(\frac{n\times t_{l,h}}{t_l\tilde{t_h}})}{\sqrt{(\sum_{l=1}^c t_l log\frac{t_l}{n})(\sum_{h=1}^c \tilde{t_h} log \frac{\tilde{t_h}}{n})}},
\end{aligned}
\end{equation}
where $t_l$ is the number of data points in the $l$-th cluster, $1\leq l \leq c$, which is generated by a clustering algorithm. While $\tilde{t_h}$ denotes the number of data points in the $h$-th ground truth cluster. $t_{l,h}$ is the number of data points which are in the intersection of the $l$-th and $h$-th clusters. Similarly, a larger \textit{NMI} means better performance.

The performance evaluations are performed over six benchmark datasets as follows: 
\begin{itemize}
\item \textbf{COIL20} \cite{nene1996columbia}: It contains 1,440 gray-scale images of 20 objects (72 images per object) under various poses. The objects are rotated through 360 degrees and taken at the interval of 5 degrees.
\item \textbf{MNIST} \cite{lecun1998gradient}: It is a large-scale dataset of handwritten digits, which has been widely used as a test bed in data mining. The dataset contains 60,000 training images and 10,000 testing images. In this paper, we use its subclass version, MNIST-S, in which one handwritten digit image per ten images, for each class, is randomly sampled from the MNIST database. There are 6,996 handwritten images with a resolution of 28$\times$28.
\item \textbf{ORL} \cite{samaria1994parameterisation}: This data set which is used as a benchmark for face recognition, consists of 40 different subjects with 10 images each. We also resize each image to 32 $\times$ 32.
\item \textbf{UMIST}: UMIST, which is also known as the Sheffield Face Database, consists of 564 images of 20 individuals. Each individual is shown in a variety of poses from profile to frontal views.
\item \textbf{USPS} \cite{hull1994adatabase}: This dataset collects 9,298 images of handwritten digits (0-9) from envelops by the U.S. Postal Service. All images have been normalized to the same size of 16 $\times$ 16 pixels in gray scale. %\cite{hull1994adatabase} 
\item \textbf{YaleB} \cite{georghiades2001few}: It consists of 2,414 frontal face images of 38 subjects. Different lighting conditions have been considered in this dataset. All images are reshaped into 32 $\times$ 32 pixels. %\cite{georghiades2001few}
\end{itemize}
The pixel value in data is used as the feature. Details of data sets that are used in this paper are summarized in Table \ref{tab_datasets}.

\begin{table}[t] 
\caption{Summary of data sets.}
\centering
\begin{tabular}{|l|c|c|c|c|c|c|}
\hline
& COIL20 & MNIST & ORL & UMIST & USPS & YaleB \\
\hline
Number of data & 1,440 & 6,996 & 400 & 564 & 9,298 & 2,414 \\
\hline
Number of classes & 20 & 10 & 40 & 20 & 10 & 38 \\
\hline
Feature dimensions & 1,024 & 784 & 1,024 & 644 & 256 & 1,024\\
\hline
\end{tabular}
\label{tab_datasets}
\end{table}

\begin{table}[b] 
\caption{Performance comparison (\textit{ACC}$\pm$\textit{STD}).}
\tiny
\centering
\begin{tabular}{|l|c|c|c|c|c|c|c|c|}
\hline
& COIL20 & MNIST & ORL & UMIST & USPS &YaleB\\\hline
AllFea&$0.7051\pm0.0294$&$0.6009\pm0.0063$&$0.6675\pm0.0112$&$0.4800\pm0.0115$&$0.7139\pm0.0272$&$0.1261\pm0.0025$\\\hline
MaxVar&$0.7124\pm0.0191$&$0.6239\pm0.0100$&$0.6965\pm0.0121$&$0.4984\pm0.0141$&$0.7165\pm0.0186$&$0.1291\pm0.0042$\\\hline
SPEC&$0.7105\pm0.0116$&$0.6254\pm0.0024$&$0.6645\pm0.0065$&$0.4824\pm0.0077$&$0.7037\pm0.0315$&$0.1307\pm0.0049$\\\hline
MCFS&$0.7355\pm0.0050$&$0.6299\pm0.0037$&$0.7055\pm0.0048$&$0.5239\pm0.0038$&$0.7634\pm0.0138$&$0.1355\pm0.0043$\\\hline
RUFS&$0.7365\pm0.0024$&$0.6294\pm0.0028$&$0.6920\pm0.0033$&$0.5110\pm0.0091$&$0.7659\pm0.0076$&$0.1795\pm0.0032$\\\hline
NDFS&$0.7368\pm0.0074$&$0.6291\pm0.0016$&$0.7050\pm0.0031$&$0.5243\pm0.0028$&$0.7630\pm0.0124$&$0.1315\pm0.0034$\\\hline
LapScore&$0.7126\pm0.0249$&$0.6214\pm0.0054$&$0.7100\pm0.0117$&$0.5092\pm0.0062$&$0.7089\pm0.0324$&$0.1255\pm0.0025$\\\hline
Ours&\textbf{0.7475}$\pm$\textbf{0.0076}&$\bm{0.6392}\pm\bm{0.0056}$&$\bm{0.7210}\pm\bm{0.0052}$&$\bm{0.5343}\pm\bm{0.0062}$&$\bm{0.7813}\pm\bm{0.007}$&$\bm{0.1886}\pm\bm{0.0043}$\\\hline
\end{tabular}
\label{tab_acc}
\end{table}

\begin{table}[t] 
\caption{Performance comparison (\textit{NMI}$\pm$\textit{STD}).}
\tiny
\centering
\begin{tabular}{|l|c|c|c|c|c|c|c|c|}
\hline
& COIL20 & MNIST & ORL & UMIST & USPS &YaleB\\\hline
AllFea&$0.7884\pm0.0157$&$0.5162\pm0.0027$&$0.8265\pm0.0129$&$0.6715\pm0.0069$&$0.6305\pm0.0029$&$0.1968\pm0.0017$\\\hline
MaxVar&$0.7932\pm0.0071$&$0.5314\pm0.0063$&$0.8424\pm0.0085$&$0.6825\pm0.0063$&$0.6361\pm0.0021$&$0.2123\pm0.0040$\\\hline
SPEC&$0.7866\pm0.0061$&$0.5367\pm0.0035$&$0.8232\pm0.0021$&$0.6753\pm0.0114$&$0.6215\pm0.0073$&$0.2071\pm0.0027$\\\hline
MCFS&$0.8066\pm0.0025$&$0.5367\pm0.0003$&$0.8460\pm0.0025$&$0.7005\pm0.0053$&$0.6419\pm0.0015$&$0.2024\pm0.0033$\\\hline
RUFS&$0.8045\pm0.0025$&$0.5374\pm0.0021$&$0.8430\pm0.0044$&$0.6898\pm0.0035$&$0.6468\pm0.0027$&$0.2845\pm0.0040$\\\hline
NDFS&$0.8062\pm0.0058$&$0.5376\pm0.0004$&$0.8458\pm0.0026$&$0.6981\pm0.0054$&$0.6452\pm0.0054$&$0.2048\pm0.0041$\\\hline
LapScore&$0.7920\pm0.0101$&$0.5308\pm0.0065$&$0.8421\pm0.0006$&$0.6924\pm0.0027$&$0.6291\pm0.0047$&$0.1945\pm0.0018$\\\hline
Ours&$\bm{0.8119}\pm\bm{0.0035}$&$\bm{0.5422}\pm\bm{0.0018}$&$\bm{0.8518}\pm\bm{0.0027}$&$\bm{0.7112}\pm\bm{0.0033}$&$\bm{0.6535}\pm\bm{0.0022}$&$\bm{0.2959}\pm\bm{0.0043}$\\\hline
\end{tabular}
\label{tab_nmi}
\end{table}

\subsection{Experimental Results}
To compare the performance of our proposed algorithm with others, we repeatedly perform the test five times and report the average performance results (\textit{ACC} and \textit{NMI}) with standard deviations in Tables \ref{tab_acc} and \ref{tab_nmi}. It is observed that our proposed method consistently achieves better performance than all other compared approaches across all the data sets. Besides, it is worth noting that our method is superior to those state-of-the-art counterparts that rely on a graph Laplacian (SPEC, RUFS, NDFS, LapScore).
\begin{figure}{b}
%\vskip -0.1in
  \centering
  \subfigure[COIL20]{
    %\label{exp3_1} %% label for first subfigure
    \includegraphics[width = .30\linewidth]
    {./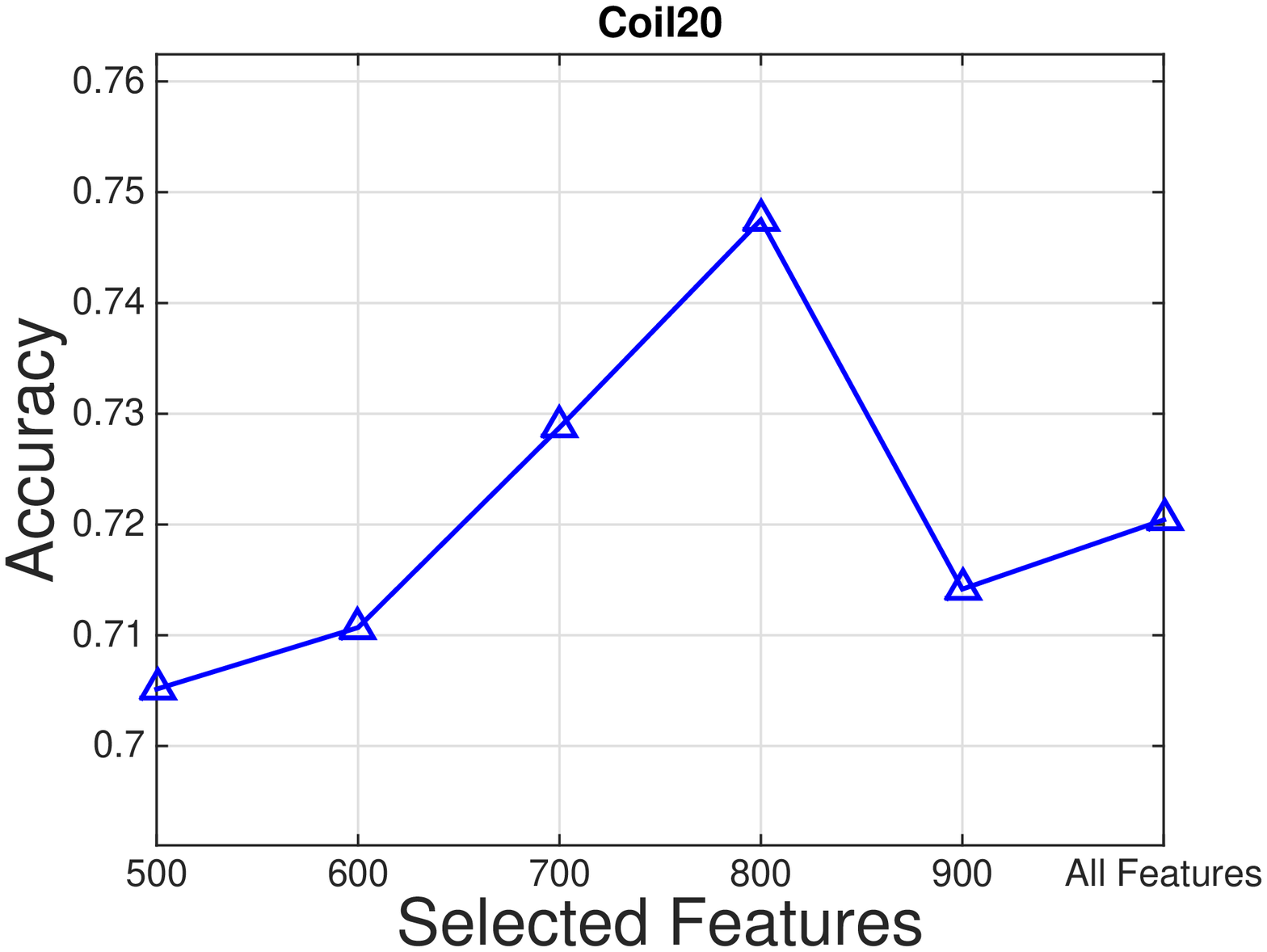}}\vspace{-0mm}\hspace{0mm}
   \subfigure[MNIST]{
   %\label{exp3_2} %% label for first subfigure
   \includegraphics[width=.30\linewidth]
   {./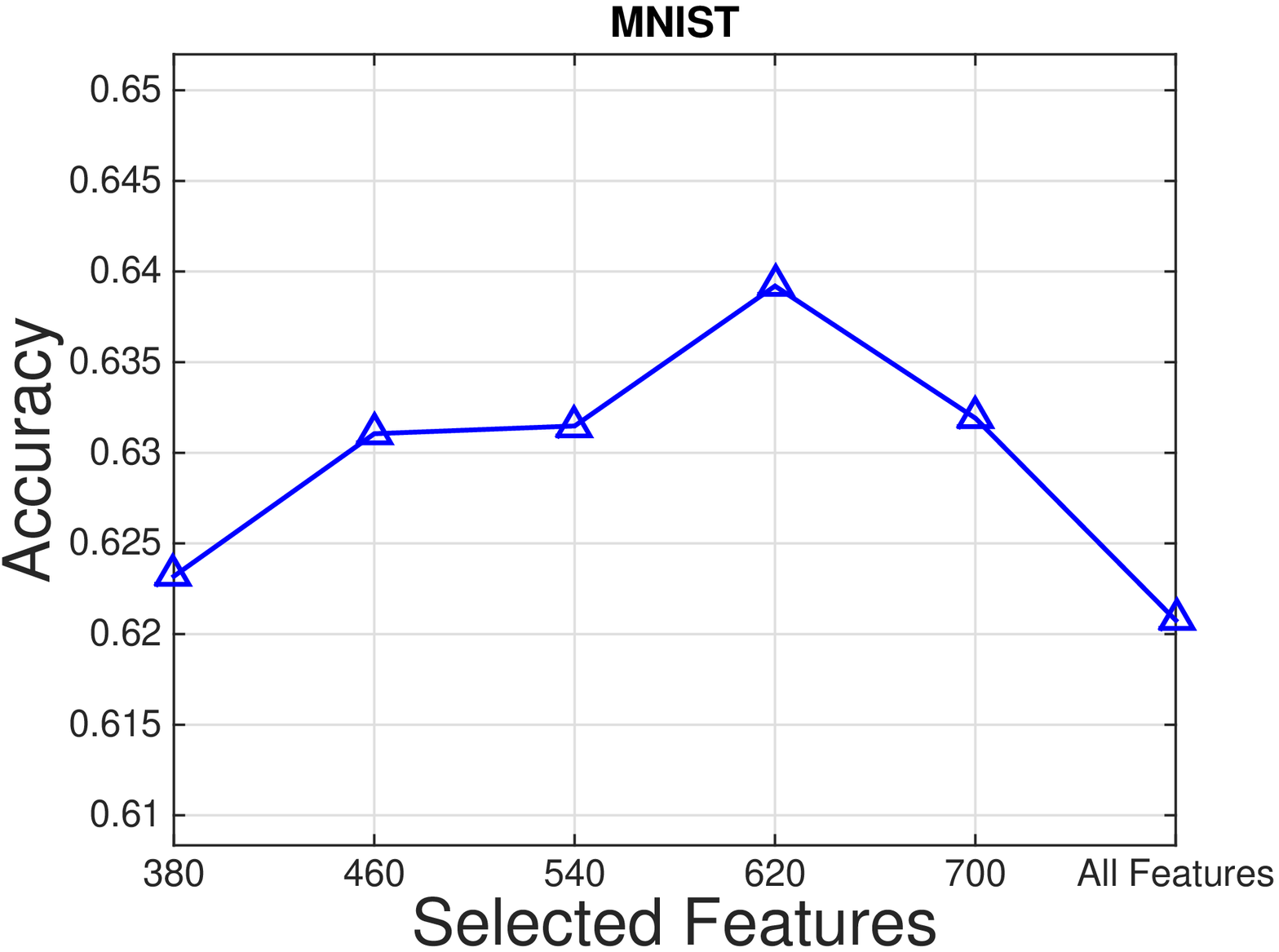}}\vspace{-0mm}\hspace{0mm}
  \subfigure[USPS]{
    %\label{exp3_3} %% label for first subfigure
    \includegraphics[width = .30\linewidth]
    {./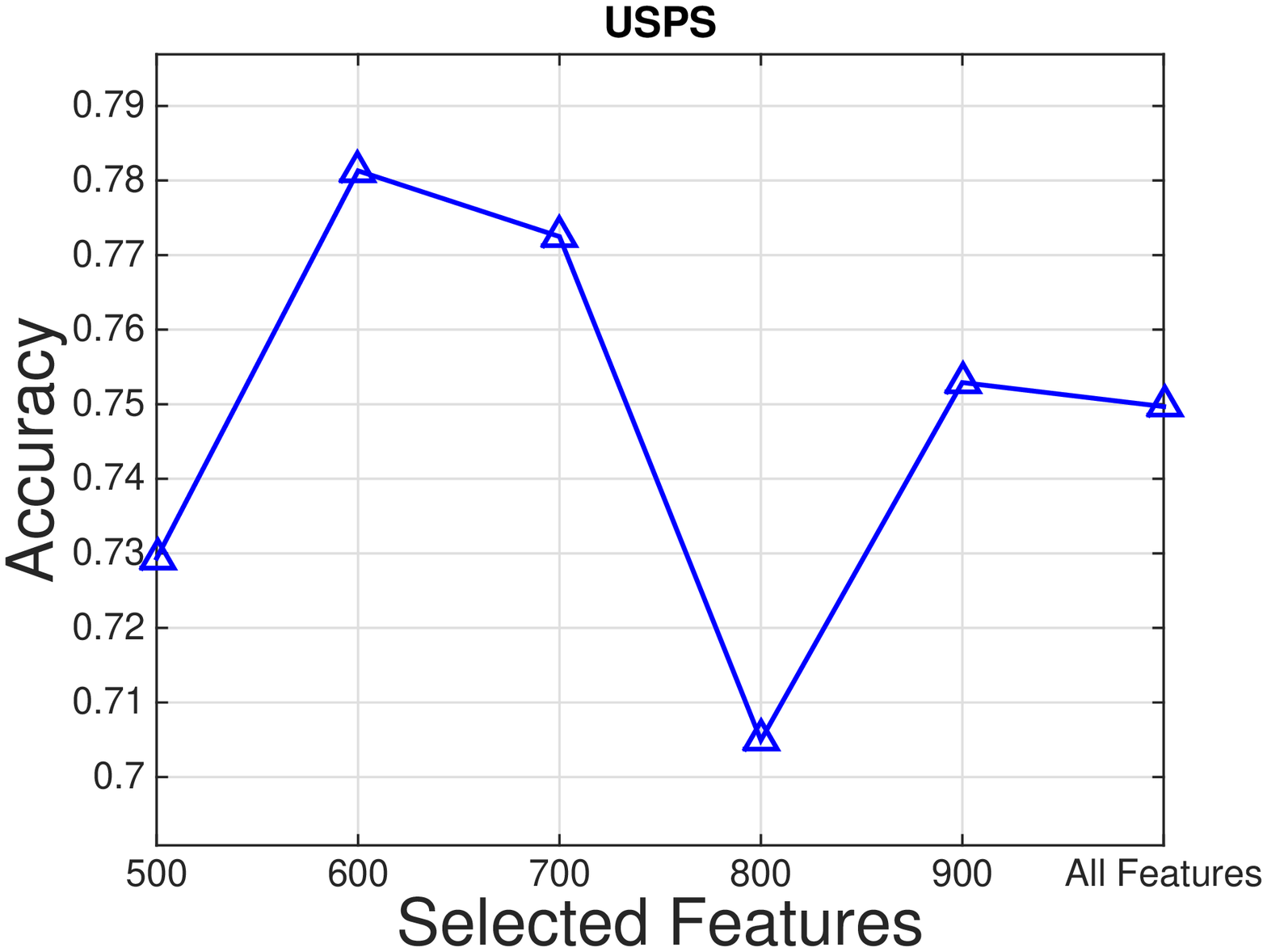}}\vspace{-0mm}\hspace{0mm}
\caption{Performance variation results with respect to the number of selected features using the proposed algorithm over three data sets, COIL20, MNIST, and USPS.}
\label{fig_dim} %% label for entire figure
\end{figure}

We study how the number of selected features can affect the performance by conducting an experiment whose results are shown in Figure \ref{fig_dim}. From the figure, performance variations with respect to the number of selected features using the proposed algorithm over three data sets, including COIL20, MNIST, and USPS, have been illustrated. We only adopt \textit{ACC} as the metric. Some observations can be obtained: 1) When the number of selected features is small, e.g. 500 on each data set, the accuracy value is relatively small. 2) With the increase of selected features, performance can peak at a certain point. For example, the performance of our algorithm peaks at 0.7475 on COIL20 when the number of selected features increases to 800. Similarly, 0.6392 (800 selected features) and 0.7813 (600 selected features) are observed on MNIST and USPS, respectively. 3) When all features are in use, the performance is worse than the best. Similar trends can be also observed on the other data sets. It is concluded that our algorithm is able to select distinctive features.

To demonstrate exploiting feature correlation is beneficial to the performance, we conduct an experiment in which parameters $\alpha$ and $p$ are both fixed at 1. $\beta$ varies in a range of $[ 0, 10^{-3}, 10^{-2}, 10^{-1}, 1, 10^1, 10^2, 10^3 ]$. The performance variation results with respect to different $\beta$s are plotted in Figure \ref{fig_2p}. The experiment is conducted over three data sets, i.e. COIL20, MNIST, and USPS. From the results, we can observe that the performance is relatively low, when there is no correlation exploiting in the framework, i.e. $\beta = 0$. The performance always peaks at a certain point when a proper degree of sparsity is imposed to the regularization term. For example, the performance is only 0.6993 when $\beta = 0$ on COIL20. The performance increases to 0.7285 when $\beta=10^1$. Similar observations are also obtained on the other data sets. We can conclude that sparse structure learning on feature coefficient matrix contributes to the performance of our unsupervised feature selection method.
\begin{figure}[t]
%\vskip -0.1in
  \centering
  \subfigure[COIL20]{
    %\label{exp3_1} %% label for first subfigure
    \includegraphics[width = .30\linewidth]
    {./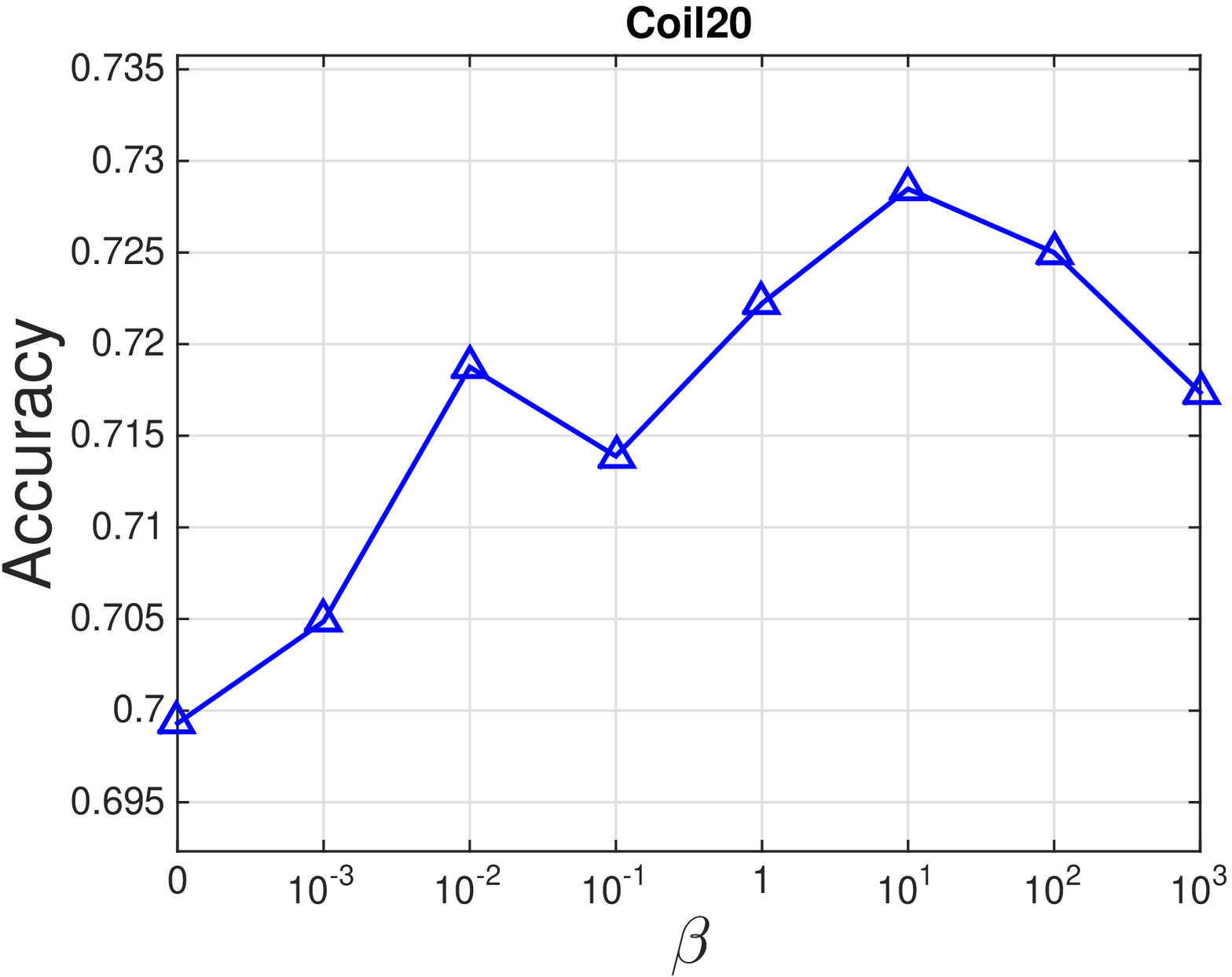}}\vspace{-0mm}\hspace{0mm}
   \subfigure[MNIST]{
   %\label{exp3_2} %% label for first subfigure
   \includegraphics[width=.30\linewidth]
   {./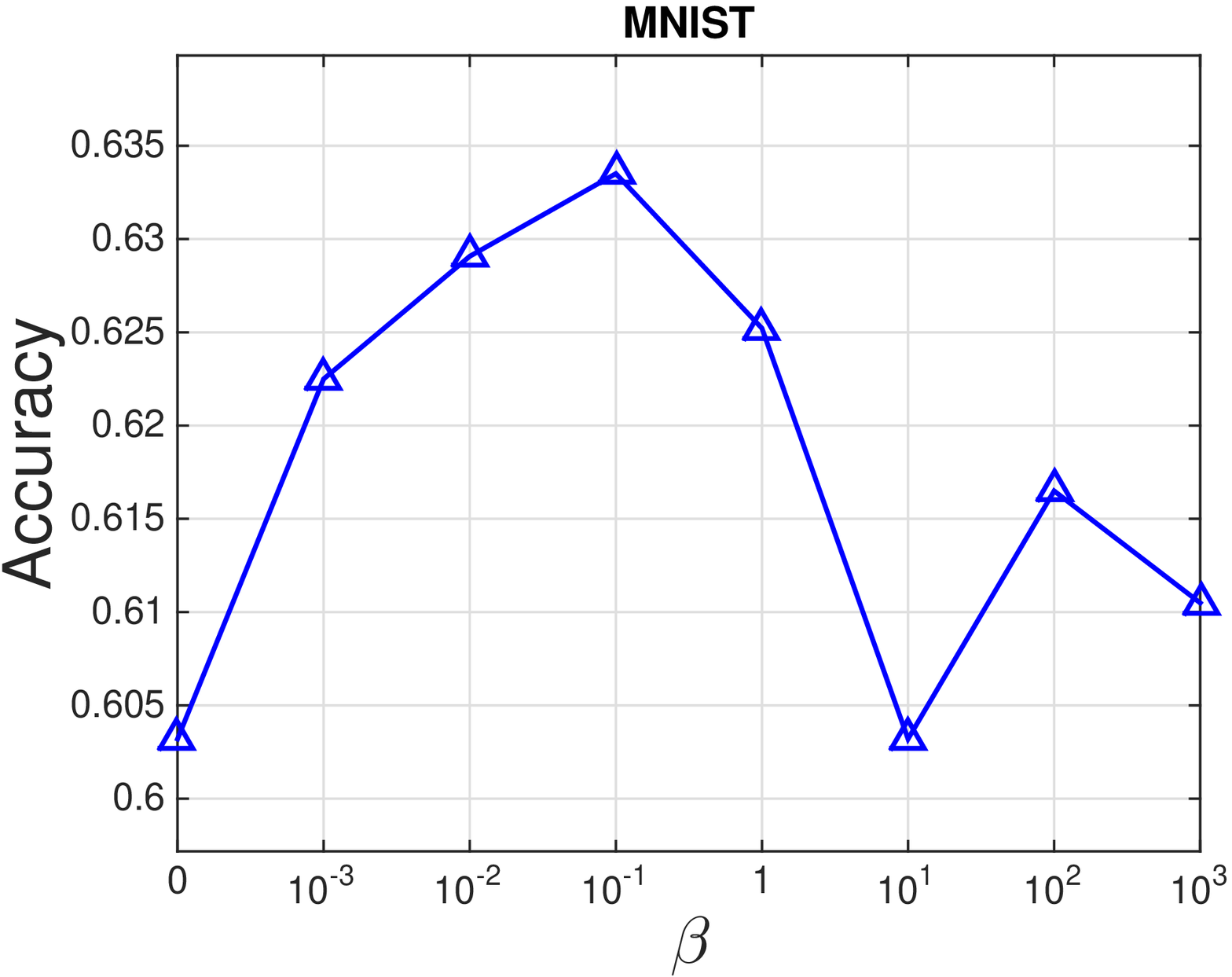}}\vspace{-0mm}\hspace{0mm}
  \subfigure[USPS]{
    %\label{exp3_3} %% label for first subfigure
    \includegraphics[width = .30\linewidth]
    {./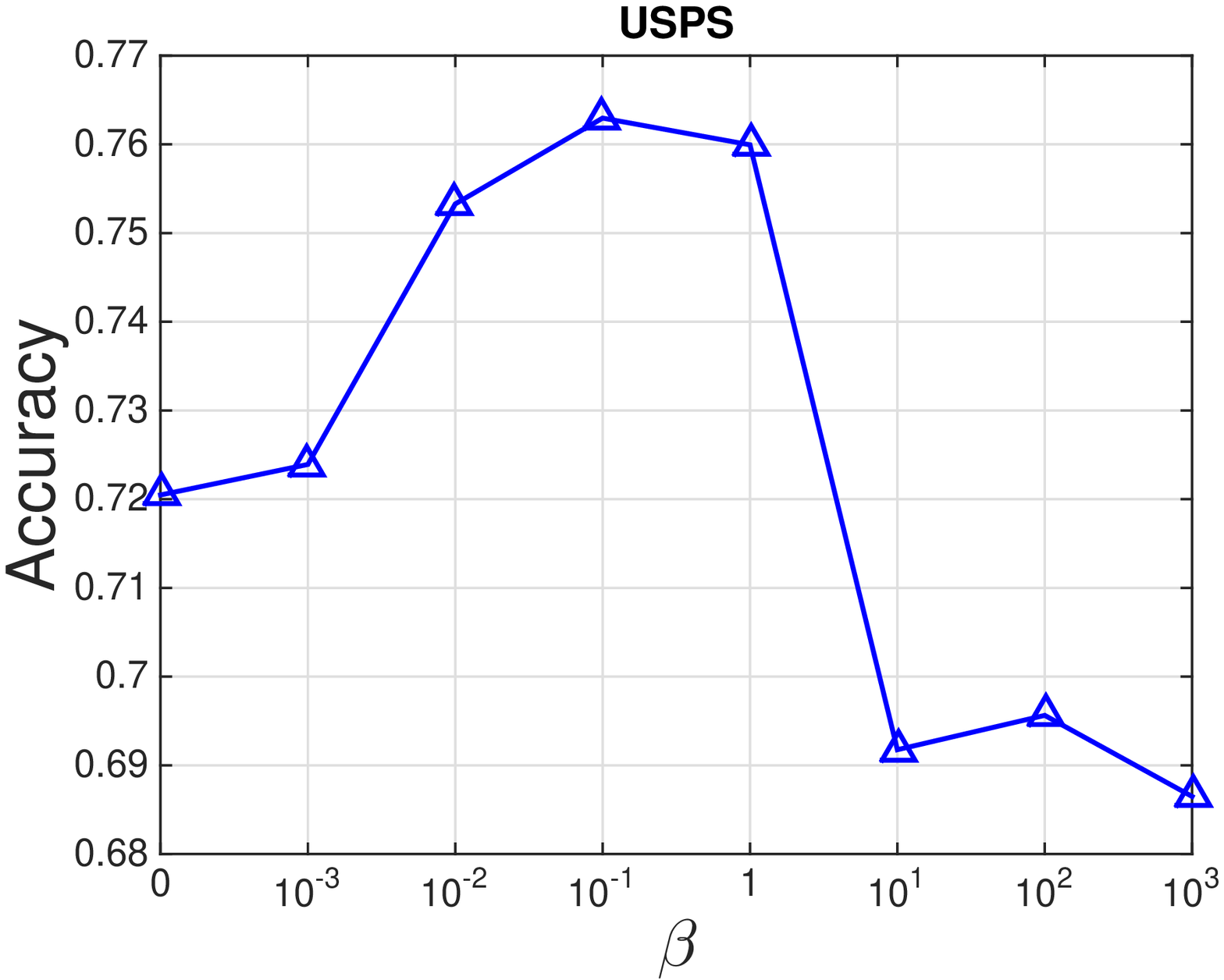}}\vspace{-0mm}\hspace{0mm}
\caption{Performance variation results with respect to different values of regularization parameter, $\beta$s, over three data sets, COIL20, MNIST, and USPS.}
\label{fig_2p} %% label for entire figure
\end{figure}
\begin{figure*}[!bt]
\centering
\includegraphics[width=1.07\textwidth]{./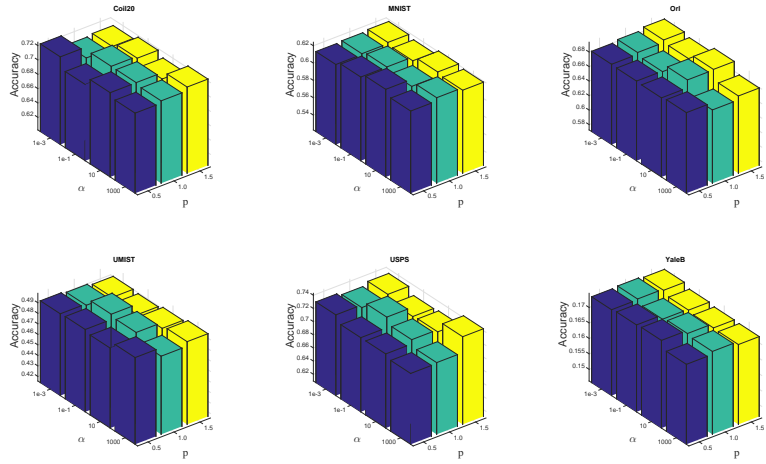}
\caption {Performance variation results under different combinations of $\alpha$s and $p$s. $\beta$ is fixed at $10^{-1}$.}
\label{exp2_1}
\end{figure*}
\subsection{Studies on Parameter Sensitivity and Convergence}
There are three parameters in our algorithms, which are denoted as $\alpha$, $\beta$ and $p$ in \eqref{eq_obj_final}. $\alpha$ and $\beta$ are two regularization parameters while $p$ controls the degree of sparsity. To investigate the sensitivity of the parameters, we conduct an experiment to study how they exert influences on performance. Firstly, we fix $\beta = 10^{-1}$ and derive the performance variations under different combinations of $\alpha$s and $p$s in Figure \ref{exp2_1}. Secondly, $\alpha$ is fixed at $10^{-1}$. The performance variation results with respect to different $\beta$s and $p$s are shown in Figure \ref{exp2_2}. Both $\alpha$ and $\beta$ vary in a range of $[10^{-3}, 10^{-1}, 10^{1}, 10^{1}]$. While $p$ changes in $[0.5, 1.0, 1.5]$. We only take \textit{ACC} as the metric.

To validate that our algorithm will monotonically increase the objective function value in \eqref{eq_obj_final}, we conduct an experiment to demonstrate this fact. In this experiment, all parameters ($\alpha, \beta$, and $p$) in \eqref{eq_obj_final} are fixed at 1. The objective function values and corresponding iteration numbers are drawn in Figure \ref{fig_obj}. We take COIL20, MNIST, and USPS as examples. Similar observations can be also obtained on the other data sets. From the figure, it can be seen that our algorithm converges to the optimum, usually within eight iteration steps, over three data sets. We can then conclude that the proposed method is efficient and effective.

\begin{figure*}[!t]
\centering
\includegraphics[width=1.07\textwidth]{./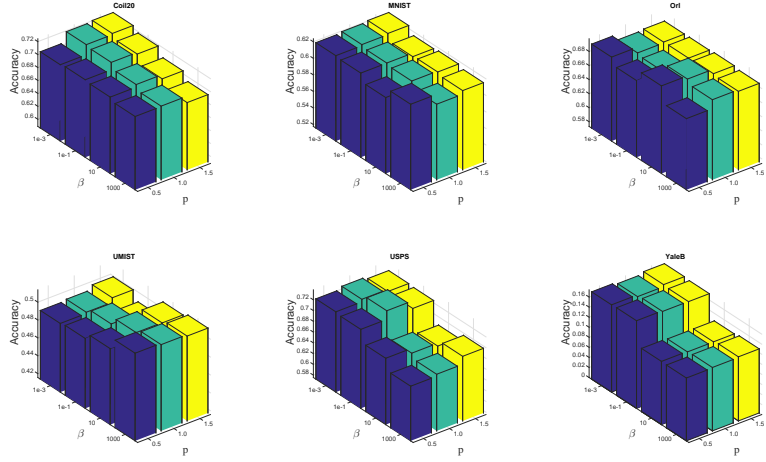}
\caption {Performance variation results under different combinations of $\beta$s and $p$s. $\alpha$ is fixed at $10^{-1}$.}
\label{exp2_2}
\end{figure*}
%\begin{figure*}[!ht]
%\centering
%\includegraphics[width=1.05\textwidth]{./}
%\includegraphics[scale=0.1]{./results/exp2.eps}
%\caption {Objective function values over three data sets, COIL20, MNIST, and USPS.}
%\label{exp3}
%\end{figure*}
\begin{figure}
%\vskip -0.1in
  \centering
  \subfigure[COIL20]{
    %\label{exp3_1} %% label for first subfigure
    \includegraphics[width = .30\linewidth]
    {./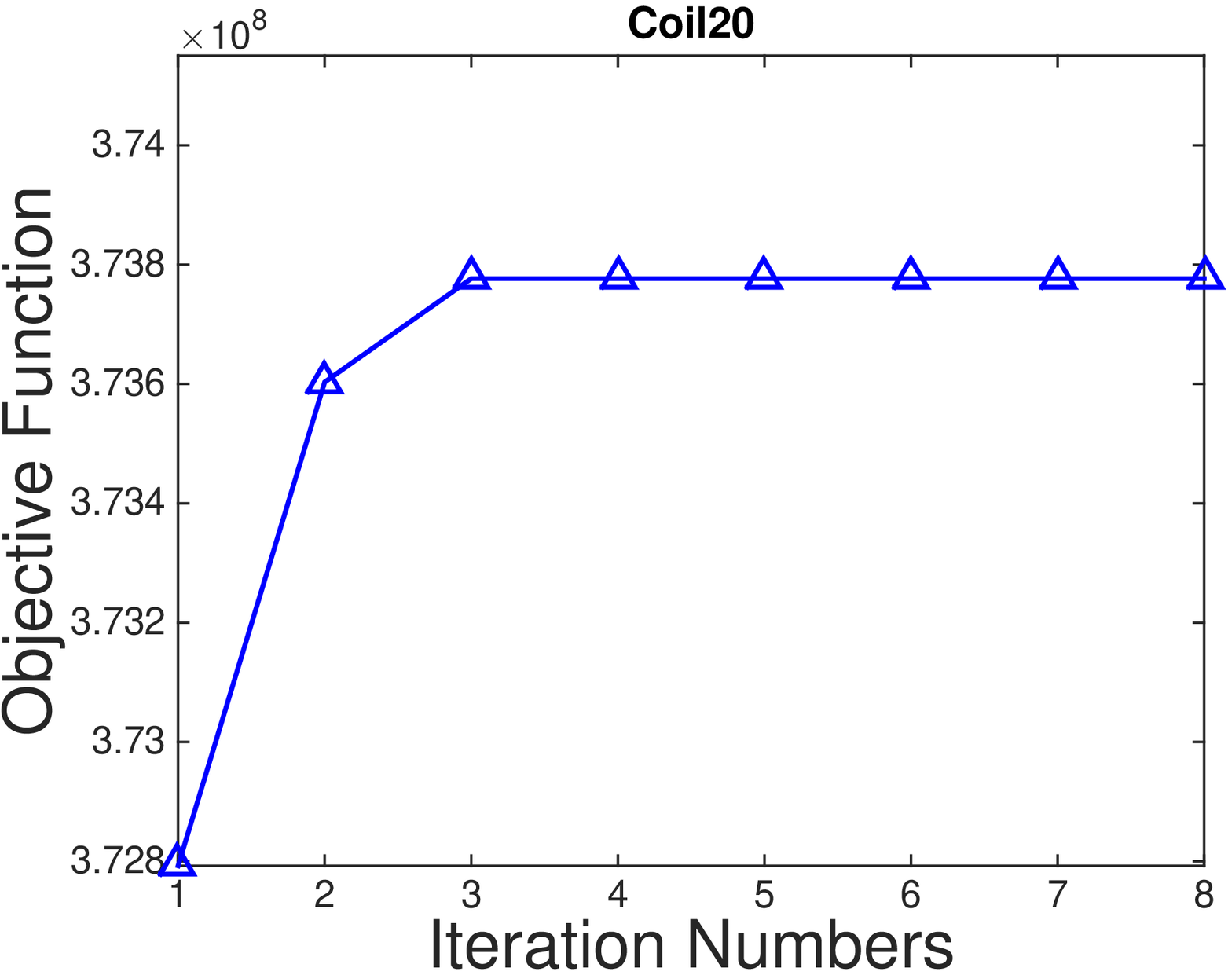}}\vspace{-0mm}\hspace{0mm}
   \subfigure[MNIST]{
   %\label{exp3_2} %% label for first subfigure
   \includegraphics[width=.30\linewidth]
   {./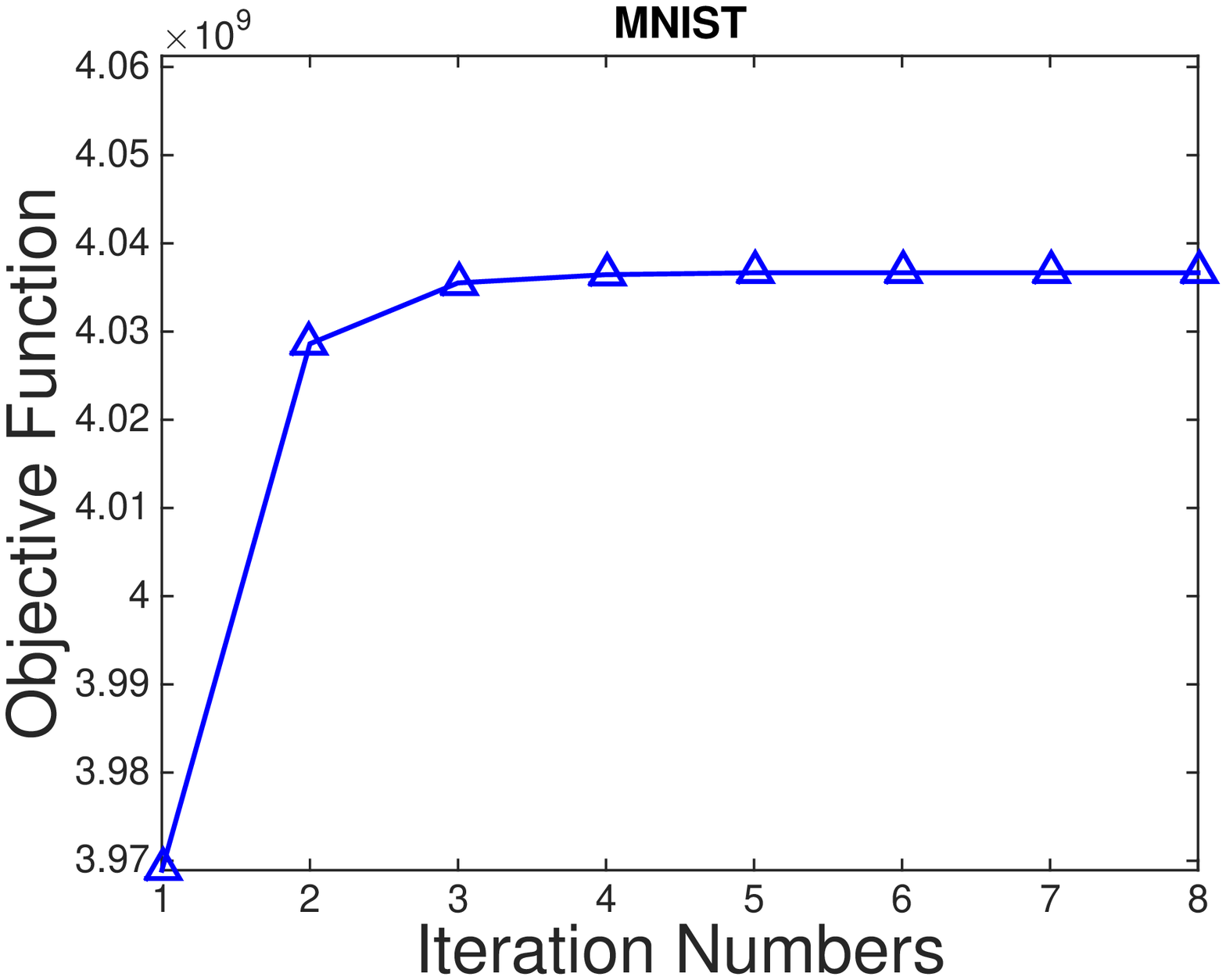}}\vspace{-0mm}\hspace{0mm}
  \subfigure[USPS]{
    %\label{exp3_3} %% label for first subfigure
    \includegraphics[width = .30\linewidth]
    {./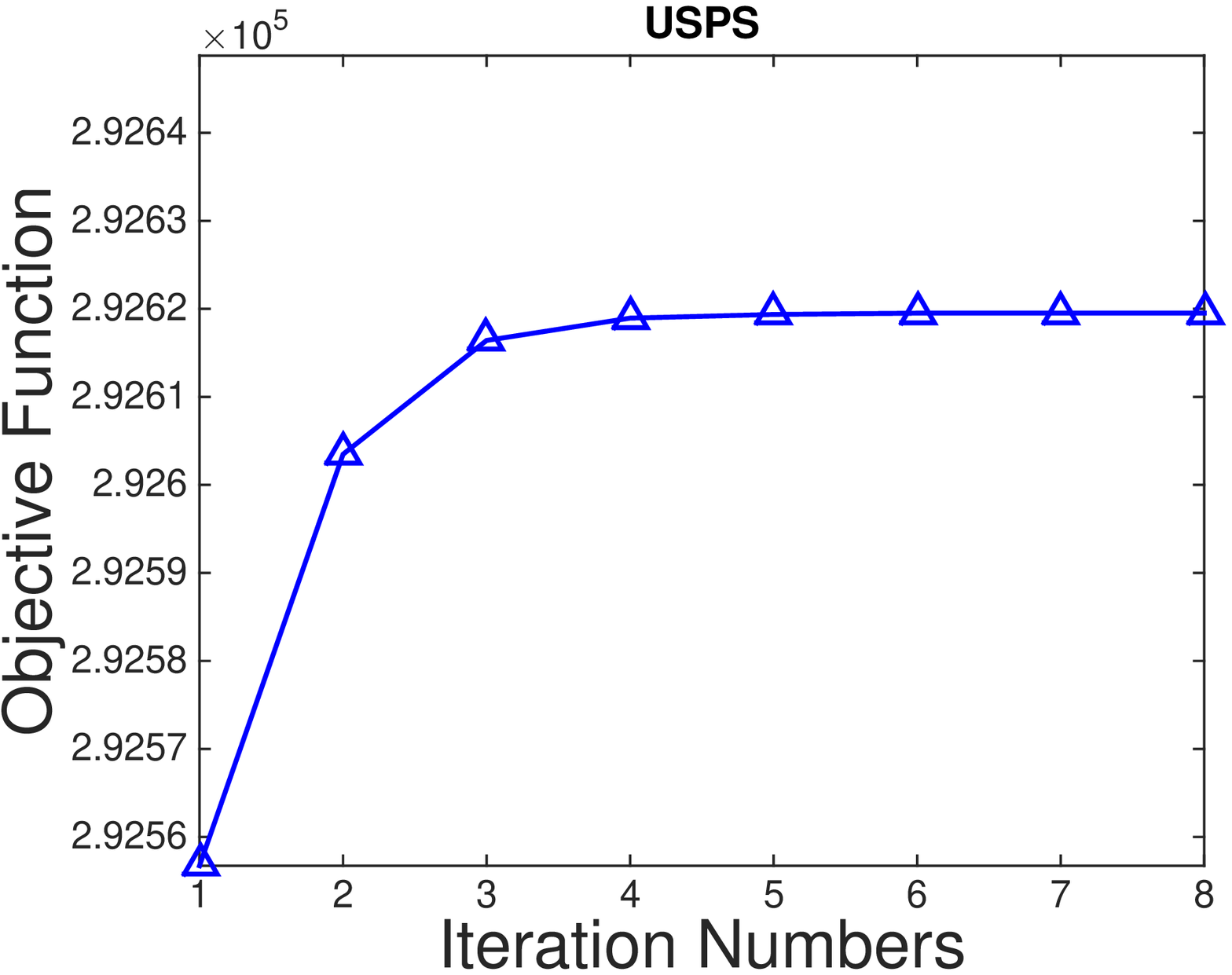}}\vspace{-0mm}\hspace{0mm}
\caption{Objective function values of our proposed objective function in \eqref{eq_obj_final} over three data sets, COIL20, MNIST, and USPS.}
\label{fig_obj} %% label for entire figure
\vspace{-2pt}
\end{figure}
\section{Conclusion}
In this paper, an unsupervised feature selection approach has been proposed by using the Maximum Margin Criterion and the sparsity-based model. More specifically, the proposed method seeks to maximize the total scatter on one hand. On the other hand, the within-class scatter is simultaneously considered to minimize. Since there is no label information in an unsupervised scenario, K-means clustering is embedded into the framework jointly. Advantages can be summarized as twofold: First, pseudo labels generated by K-means clustering is beneficial to maximizing class margins in each iteration step. Second, pseudo labels can guide the sparsity-based model to exploit sparse structures of the feature coefficient matrix. Noisy and uncorrelated features can be therefore removed. Since the objective function is non-convex for all variables, we have proposed an algorithm with a guaranteed convergence property. To avoid to rapidly converge to a local optimum which is caused by K-means, we have applied an updating strategy to alleviate the problem. In this way, our proposed method might converge to the global optimum. Extensive experimental results have shown that our method has superior performance against all other compared approaches over six benchmark data sets.
\bibliographystyle{splncs03}
%\begin{thebibliography}{}
%\bibitem{nene1996columbia}
%%Sameer A Nene, Shree K Nayar, Hiroshi Murase and others
%Columbia object image library (COIL-20)  
%Technical Report CUCS-005-96, 1996
%\end{thebibliography}
\bibliography{ecml15}
\end{document}